\newcommand{\cf}{\emph{cf}.}
\DeclareMathOperator*{\argmax}{arg\,max}
\newcommand{\bool}[1]{[\![#1]\!]}
\newcommand{\norm}[2]{|\!|#2|\!|_{#1}}
\renewcommand{\vec}[1]{\mathbf{#1}}
\newcommand{\M}{\mathcal{M}} 
\newcommand{\hyp}{\mathcal{H}} 
\newcommand{\hypX}[1]{\vec{x}_{#1}} 
\renewcommand{\O}{\mathcal{O}}
\newcommand{\Path}{\mathbf{Path}}
\newcommand{\distr}[1]{\mathcal{D}(#1)}
\newcommand{\trans}[1]{\xrightarrow{#1}}
\newcommand{\prismcomment}[1]{\mbox{ \color{teal} \texttt{#1}}}
\newcommand{\prismkeyword}[1]{{\color{purple}\mathtt{#1}}}
\newcommand{\prismident}[1]{\mathtt{#1}}
\newcommand{\prismtab}{\hspace*{0.5cm}}
\newcommand{\prism}{\textsc{Prism}}
\newcounter{inlineenum}
\renewcommand{\theinlineenum}{\roman{inlineenum}}
\newenvironment{inlineenum}
  {\unskip\ignorespaces\setcounter{inlineenum}{0}%
   \renewcommand{\item}{\refstepcounter{inlineenum}{(\theinlineenum)~}}}
  {\ignorespacesafterend}
\title{MM Algorithms to Estimate Parameters in Continuous-time Markov Chains} 
\author{Giovanni Bacci}{Dept. of Computer Science, Aalborg University, Denmark}{giovbacci@cs.aau.dk}{https://orcid.org/0000-0001-8529-0681}{}
\author{Anna Ing{\'{o}}lfsd{\'{o}}ttir}{Dept. of Computer Science, Reykjav\'{i}k University, Iceland}{annai@ru.is}{}{}
\author{Kim G.\ Larsen}{Dept. of Computer Science, Aalborg University, Denmark}{kgl@cs.aau.dk}{}{VILLUM Investigator project S4OS, funded by the Velux Foundation.}
\author{Rapha\"{e}l Reynouard}{Dept. of Computer Science, Reykjav\'{i}k University, Iceland}{raphal20@ru.is}{}{}
\authorrunning{G.\ Bacci, A.\ Ing{\'{o}}lfsd{\'{o}}ttir, K.G.\ Larsen, and R.\ Reynouard} 
\keywords{MM Algorithm, Continuous-time Markov chains, Maximum likelihood estimation, Parallel Compositions} 
\begin{document}

\maketitle

\begin{abstract}

Continuous-time Markov chains (CTMCs) are popular modeling formalism that constitutes the underlying semantics for real-time probabilistic systems such as queuing networks, stochastic process algebras, and calculi for systems biology. 
\prism\ and \textsc{Storm} are popular model checking tools that provide a number of powerful analysis techniques for CTMCs. These tools accept models expressed as the parallel composition of a number of modules interacting with each other.  

The outcome of the analysis is strongly dependent on the parameter values used in the model which govern the timing and probability of events of the resulting CTMC. 
However, for some applications, parameter values have to be empirically estimated from partially-observable executions.

In this work, we address the problem of estimating parameter values of CTMCs expressed
as \prism\ models from a number of partially-observable executions. 
We introduce the class parametric CTMCs ---CTMCs where transition rates are polynomial functions over a set of parameters--- as an abstraction of CTMCs covering a large class of \prism\ models. 
Then, building on a theory of algorithms known by the initials MM, for minorization--maximization, we present iterative maximum likelihood estimation algorithms for parametric CTMCs covering two learning scenarios: when both state-labels and dwell times are observable, or just state-labels are. 

We conclude by illustrating the use of our technique in a simple but non-trivial case study: the analysis of the spread of COVID-19 in presence of lockdown countermeasures. 

\end{abstract}
\section{Introduction} \label{sec:intro} 
A continuous-time Markov chain (CTMC) is a model of a dynamical system that, upon entering some state, remains in that state for a random real-valued amount of time ---called the dwell time or sojourn time--- and then transitions probabilistically to another state. 
CTMCs are popular models in performance and dependability analysis. They have wide application and constitute the underlying semantics for real-time probabilistic systems such as queuing networks~\cite{LazowskaZGS84}, stochastic process algebras~\cite{Hillston94}, and calculi for systems biology~\cite{CiocchettaH09,KwiatkowskaNP08}. 

Model checking tools such as \prism~\cite{KwiatkowskaNP11} and \textsc{Storm}~\cite{DehnertJK017}
provide access to a number of powerful analysis techniques for CTMCs. 
Both tools accept models written in the \prism\ language, a state-based language based on~\cite{AlurH99b} that represents synchronous and asynchronous components in a uniform framework that supports compositional design. 
\begin{figure}[t!]
    \centering
    \begin{subfigure}[t]{0.5\textwidth}
        \centering
        \input{prismmodels/SIRuno.tex}
    \end{subfigure}%
    ~
    \begin{subfigure}[t]{0.45\textwidth}
        \centering
        \input{prismmodels/SIRagents.tex}
    \end{subfigure}
    \caption{(Left) SIR model with lockdown from~\cite{Milazzo21}, (Right) Semantically equivalent formulation of the model to the left where susceptible, infected, and recovered individuals are modeled as distinct modules interacting with each other via synchronization.} \label{fig:bigSIR}
\end{figure}
For example, consider the two semantically equivalent \prism\ models depicted in Fig.~\ref{fig:bigSIR} implementing a variant of the Susceptible-Infected-Recovered (SIR) model proposed in~\cite{Milazzo21} to describe the spread of disease in presence of lockdown restrictions. The model depicted to the left consists of a single module, whereas the one to the right implements a compositional design where modules interact by synchronizing on two actions: \texttt{infection} and \texttt{recovery}.

Both models distinguish between three types of individuals: susceptible, infected, and recovered. Susceptible individuals become infected through contact with another infected person and can recover without outside interference. The SIR model is parametric in $\mathtt{beta}$, $\mathtt{gamma}$, and $\mathtt{plock}$. $\mathtt{beta}$ is the \emph{infection coefficient}, describing the probability of infection after the contact of a susceptible individual with an infected one; $\mathtt{gamma}$ is the \emph{recovery coefficient}, describing the rate of recovery of an infected individual (in other words, $1/\mathtt{gamma}$ is the time one individual requires to recover); and $\mathtt{plock} \in [0,1]$ is used to scale down the infection coefficient modeling restrictions to reduce the spread of disease. 

Clearly, the outcome of the analysis of the above SIR model is strongly dependent on the parameter values used in each module, as they govern the timing and probability of events of the CTMC describing its semantics. However, in some application domains, parameter values have to be empirically evaluated from a number of partially-observable executions of the model. To the best of our knowledge, neither \prism\ nor \textsc{Storm} provide integrated support for this task, leaving the burden of estimating parameter values to the modeler. A paradigmatic example is the modeling pipeline described in~\cite{Milazzo21}, where the parameters of the SIR model in Fig.~\ref{fig:bigSIR} are estimated based on a definition of the model as ODEs, and later used in an approximation of the original SIR model designed to reduce the state space of the SIR model in Fig.~\ref{fig:bigSIR} (left).
Such modeling pipelines require high technical skills, are error-prone, and are time-consuming, thus limiting the applicability and the user base of model checking tools. 

In this work, we address the problem of estimating parameter values of CTMCs expressed as \prism\ models from a number of partially-observable executions. 
The expressive power of the \prism\ language brings two technical challenges:
\begin{inlineenum}
    \item the classic state-space explosion problem due to modular specification, and
    \item the fact that the transition rates of the CTMCs result from the algebraic composition of the rates of different (parallel) modules which are themselves defined as arithmetic expressions over the parameters (\emph{cf}.\ Fig.~\ref{fig:bigSIR}).
\end{inlineenum}
We address the second aspect of the problem by considering a class of \emph{parametric} CTMCs, which are CTMCs where transition rates are polynomial functions over a fixed set of parameters. In this respect, parametric CTMCs have the advantage to cover a rich subclass of \prism\ models and to be closed under the operation of parallel composition implemented by the \prism\ language. 

Following the standard approach, we pursue the maximum likelihood estimate (MLE), i.e., we look for the parameter values that achieve the maximum joint likelihood of the observed execution sequences. However, given the non-convex nature of the likelihood surface, computing the global maximum that defines the MLE is computationally intractable~\cite{Terwijn02}.

To deal with this issue we employ a theoretical iterative optimization principle known as MM algorithm~\cite{Lange16,Lange13}. 
The well-known EM algorithm~\cite{Dempster77} is an instance of MM optimization framework and is a versatile tool for constructing optimization algorithms. 
MM algorithms are typically easy to design, numerically stable, and in some cases amenable to accelerations~\cite{JamshidianJ97,ZhouAL11}. 
The versatility of the MM principle consists in the fact that is built upon a simple theory of inequalities, allowing one to derive optimization procedures. In fact, these procedures appear to be much easier than the derivation of a corresponding EM algorithm that relies on choosing appropriate missing data structures, i.e., latent variables. 
As the EM algorithm, the MM principle is useful to derive iterative procedures for maximum likelihood estimation which increase the likelihood at each iteration and converge to some local optimum.  

The main technical contribution of the paper consists in laying out MM techniques for devising novel iterative maximum likelihood estimation algorithms for parametric CTMCs covering two learning scenarios.

In the first scenario, we assume that state labels and dwell times are observable variables while state variables are hidden. The learning procedure devised for this case is a generalization of the Baum-Welch 
algorithm~\cite{Rabiner89} ---an EM algorithm that estimates transition probabilities in hidden Markov models--- to parametric CTMCs. 

In the second scenario, state labels are observable while state variables and dwell time variables are hidden. In contrast with the first case, the objective function that defines the MLE achieves the same value on all the CTMCs sharing the same embedded Markov chain. Thus, a standard adaptation of the Baum-Welch algorithm to this case would not lead to a procedure able to learn the continuous-time aspects of the observed system. Nevertheless, by making an analogy between the way transitions ``compete'' with each other in race conditions and the Bradley--Terry model of ranking~\cite{BradleyT52}, we successfully extend the solution devised for the first scenario with techniques used by Lange, Hunter, and Yang in \cite{LangeHY00} for finding rank estimates in the Bradley--Terry model. 
We provide experimental evidence that, when the model has sufficiently many constant transition rates, our solution effectively converge to the true parameter values of the model by hinging on the rate values that are known in the model. Note that this condition is easily fulfilled when one of the components is fully observable. A typical example is the model a microcontroller component running within a partially observable physical environment. Other examples may arise from website analysis for reviewing a website's performance w.r.t.\ user experience. 

We demonstrate the effectiveness of our estimation procedure on a case study taken from~\cite{Milazzo21}: the analysis of the spread of COVID-19 in presence of lockdown countermeasures. In particular, we showcase how our technique can be used to simplify modeling pipelines that involve a number of modifications of the model ---possibly introducing approximations--- and the re-estimation of its parameters. 

\paragraph*{Related Work} 
In~\cite{GeorgoulasHS18,GeorgoulasHMS14} Georgoulas et al.\ employ probabilistic programming to implement a variant of Bio-PEPA~\cite{CiocchettaH09} called ProPPA. ProPPA is a stochastic process algebra with inference capabilities that allows some rates to be assigned a prior distribution, capturing the modeler’s belief about the likely values of the rates. Using probabilistic inference, the ProPPA model may be combined with the observations to derive updated probability distributions over rates. 

Before ProPPA, Geisweiller proposed EMPEPA~\cite{Geisweiller2006}, an EM algorithm that estimates the rate values inside a PEPA model.

A closely related work is~\cite{WeiWT02} where they learn continuous-time hidden Markov models to do performance evaluation. There, observations are regarded as (discrete-time) periodic observations with fixed period $\Delta$. The learning method works in two steps: first, they employ the Baum-Welch algorithm~\cite{Rabiner89} to estimate the transition probability matrix of a hidden Markov model, then they obtain the infinitesimal generator of the CTMC from the learned transition probability matrix. In contrast with~\cite{WeiWT02}, we are able to derive a simpler procedure that directly extends the Baum-Welch algorithm to parametric CTMCs. 

In~\cite{SenVA04}, Sen et al.\ present an algorithm based on the state merging paradigm of \textsc{Alergia}~\cite{CarrascoO94} to learn a CTMC from timed observations. In contrast with our work, \cite{SenVA04} does not perform parameter estimation over structured models, but learns an unstructured (transition-labeled) CTMC. 

Another related line of research is parameter synthesis of Markov models~\cite{JansenJK22}.
In contrast with our work, parameter synthesis revolves around the problem of finding (some or all) parameter instantiations of the model that satisfy a given logical specification. 



\section{Preliminaries and Notation} \label{sec:prelim}
We denote by $\mathbb{R}$, $\mathbb{Q}$, and $\mathbb{N}$ respectively the sets of real, rational, and natural numbers, 
and by $\Sigma^n$, $\Sigma^*$ and, $\Sigma^\omega$ respectively the set of words of length $n \in \mathbb{N}$, finite length, and infinite length, built over the finite alphabet $\Sigma$.

We use  $\distr{\Omega}$ to denote the set of discrete probability distributions on $\Omega$, i.e., functions $\mu \colon \Omega \to [0,1]$, such that $\mu(X) = 1$, where $\mu(E) = \sum_{x \in E} \mu(x)$ for $E \subseteq X$. 
For a proposition $p$, we write $\bool{p}$ for the Iverson bracket of $p$, i.e., $\bool{p} = 1$ if $p$ is true, otherwise $0$. 

A labelled continuous-time Markov chain (CTMC) is defined as follows.
\begin{definition} A labelled CTMC is a tuple $\M = (S, R, \pi, \ell)$ where $S$ is a finite set of states, $R \colon S \times S \to \mathbb{R}_{\geq 0}$ is the transition rate function, $\pi \in \distr{S}$ the initial distribution of states, and $\ell \colon S \to L$ is a labelling function which assigns to each state an observable label $\ell(s)$.
\end{definition}

The transition rate function assigns rates $r = R(s, s')$ to each pair of states $s,s' \in S$ which are to be seen as transitions of the form $s \trans{r} s'$. A transition $s \trans{r} s'$ can only occur if $r > 0$. In this case, the probability of this transition to be triggered within $\tau \in \mathbb{R}_{>0}$ time-units is $1 - e^{- r \, \tau}$. When, from a state $s$, there are more than one outgoing transition with positive rate, we are in presence of a \emph{race condition}. In this case, the first transition to be triggered determines which label is observed as well as the next state of the CTMC. According to these dynamics, the time spent in state $s$ before any transition occurs, called \emph{dwell time}, is exponentially distributed with parameter $E(s) = \sum_{s' \in S} R(s,s')$, called \emph{exit-rate} of $s$. A state $s$ is called \emph{absorbing} if $E(s) = 0$, that is, $s$ has no outgoing transition. Accordingly, when the CTMC ends in an absorbing state it will remain in the same state indefinitely. 
The probability that the transition $s \trans{r} s'$ is triggered from $s$ is $r / E(s)$ and is independent from the time at which it occurs.
Accordingly, from the CTMC $\M$, we construct a (labelled) discrete-time Markov chain $\mathit{emb}(\M) = (S, P,\pi, \ell)$ with transition probability function $P \colon S \times S \to [0,1]$ defined as
\begin{equation*}
P(s, s') = \begin{cases}
R(s,s') / E(s) & \text{if $E(s) \neq 0$} \\
1 & \text{if $E(s) = 0$ and $s = s'$} \\
0 & \text{otherwise}
\end{cases}
\end{equation*}

\begin{remark}
A CTMC can be equivalently described as a tuple $(S,{\to},s_0,\ell)$ where ${\to} \subseteq S \times \mathbb{R}_{\geq 0} \times S$ is a transition \emph{relation}. The transition rate function $R$ induced by $\to$ is obtained as, $R(s,s') = \sum \{ r \mid s \trans{r} s' \}$ for arbitrary $s,s' \in S$.
\end{remark}


An \emph{infinite path} of a CTMC $\M$ is a sequence $s_0 \tau_0 s_1 \tau_1 s_2 \tau_2 \cdots \in (S \times \mathbb{R}_{>0})^\omega$ where $R(s_i, s_{i+1}) > 0$ for all $i \in \mathbb{N}$. A \emph{finite path} is a sequence $s_0 \tau_0 \cdots s_{k-1} \tau_{k-1} s_k$ where $R(s_i, s_{i+1}) > 0$ and $\tau_i \in \mathbb{R}_{>0}$ for all $i \in \{1, \dots, k-1\}$ and $s_k$ is absorbing. The meaning of a path is that the system started in state $s_0$, where it stayed for time $\tau_0$, then transitioned to state $s_1$ where it stayed for time $\tau_1$, and so on. For a finite path the system eventually reaches an absorbing state $s_k$, where it remains. 
We denote by $\Path_\M$ the set of all (infinite and finite) paths of $\M$. The formal definition of the probability space over $\Path_\M$ induced by $\M$ can be given by following the classical cylinder set construction (see e.g., ~\cite{BaierHHK03,KwiatkowskaNP07}).


Finally, we define the random variables $S_i$, $L_i$, and $T_i$ ($i \in \mathbb{N}$) that respectively indicate the $i$-th state, its label, and $i$-th dwell time of a path. 

\paragraph*{The MM Algorithm} \label{sec:MMintro}
The MM algorithm is an iterative optimisation method. The acronym MM has a double interpretation: in minimization problems, the first M stands for majorize and the second for minorize; dually, in maximization problems, the first M stands for minorize and the second for maximize. 
In this paper we only focus on maximizing an objective function $f(\vec{x})$, hence we tailor 
the presentation of the general principles of the MM framework to maximization problems.
The MM algorithm is based  on the concept of \emph{surrogate function}. A surrogate function $g(\vec{x} \mid \vec{x}_m)$ is said to \emph{minorize} a function $f(\vec{x})$ at $\vec{x}_m$ if 
\begin{align}
&f(\vec{x}_m) = g(\vec{x}_m \mid \vec{x}_m) \,,   \label{eq:tangent} \\ 
&f(\vec{x}) \geq g(\vec{x} \mid \vec{x}_m)\, \quad \text{for all } \vec{x} \neq \vec{x}_m \,. \label{eq:minorization}
\end{align}  
In the maximization variant of the MM algorithm, we maximize the surrogate minorizing function $g(\vec{x} \mid \vec{x}_m)$ rather than the actual function $f(\vec{x})$. If $\vec{x}_{m+1}$ denotes the maximum of the surrogate $g(\vec{x} \mid \vec{x}_m)$, then we can show that the next iterate $\vec{x}_{m+1}$ forces $f(\vec{x})$ uphill, Indeed, the inequalities
\begin{equation*}
f(\vec{x}_m) = g(\vec{x}_m \mid \vec{x}_m) \leq g(\vec{x}_{m+1} \mid \vec{x}_m) \leq f(\vec{x}_{m+1}) 
\end{equation*}
follow directly from the definition of $\vec{x}_{m+1}$ and the axioms \eqref{eq:tangent} and \eqref{eq:minorization}.

The art in devising an MM algorithm revolves around intelligent choices of minorizing functions. This work relies on three inequalities. 
The first basic minorization builds upon Jensen's inequality. For $x_i > 0$, $y_i > 0$ ($i =1 \dots n$), 
\begin{equation}
\ln \left(  \sum_{i = 1}^{n} x_i \right) \geq \sum_{i = 1}^{n} \frac{y_i}{\sum_{j = 1}^n y_j} \ln \left(  \frac{\sum_{j = 1}^n y_j}{y_i} x_i\right) \label{eq:basicmin1}
\end{equation}
Note that the above inequality becomes an equality whenever $x_i = y_i$ for all $i =1 \dots n$. Remarkably, the EM algorithm~\cite{Dempster77} is a special case of the MM algorithm which revolves around the above basic minorization when additionally the values $x_i$ and $y_i$ describe a probability distribution, i.e., $\sum_{i = 1}^{n} x_i = 1$ and $\sum_{i = 1}^{n} y_i = 1$.

Our second basic minorization derives from the strict concavity of the logarithm function, which implies for $x,y > 0$ that 
\begin{equation}
- \ln x \geq 1 - \ln y -  x/y\label{eq:basicmin2} 
\end{equation}
with equality if and only if $x = y$. Note that the above inequality restates the supporting hyperplane property of the convex function $- \ln x$.

The third basic minorization~\cite[\S 8.3]{Lange13} derives from the generalized arithmetic-geometric mean inequality 
which implies, for positive $x_i$, $y_i$, and $\alpha_i$ and $\alpha = \sum_{i = 1}^{n} \alpha_i$, that
\begin{equation}
    - \prod_{i = 1}^{n} x_i^{\alpha_i} 
    \geq - \left( \prod_{i = 1}^{n} y_i^{\alpha_i} \right) \sum_{i = 1}^n \frac{\alpha_i}{\alpha} \left( \frac{x_i}{y_i}\right)^{\alpha}\,.
    \label{eq:basicmin3}
\end{equation}
Note again that equality holds when all $x_i = y_i$.

Because piecemeal composition of minorization works well, our derivations apply the above basic minorizations to strategic parts of the objective function, leaving other parts untouched. 
Finally, another aspect that can simplify the derivation of MM algorithms comes from the fact that the iterative maximization procedure hinges on finding $\vec{x}_{m+1} = \argmax_{\vec{x}} g(\vec{x} \mid \vec{x}_m)$. Therefore, we can equivalently use any other surrogate function $g'(\vec{x} \mid \vec{x}_m)$ satisfying $\argmax_{\vec{x}} g(\vec{x} \mid \vec{x}_m) = \argmax_{\vec{x}} g'(\vec{x} \mid \vec{x}_m)$. 
This is for instance the case when $g(\vec{x} \mid \vec{x}_m)$ and $g'(\vec{x} \mid \vec{x}_m)$ are equal up to some (irrelevant) constant $c$, that is $g(\vec{x} \mid \vec{x}_m) = g'(\vec{x} \mid \vec{x}_m) + c$.

\section{Parametric Continuous-time Markov chains}
As mentioned in the introduction, the \prism\ language offers constructs for the modular design of CTMCs within a uniform framework that represents synchronous and asynchronous module interaction. 
For example, consider the \prism\ models depicted in Fig.~\ref{fig:bigSIR}. The behavior of each module is described by a set of commands which take the form 
\mbox{$[\prismident{action}] \; \prismident{guard} \; \rightarrow \prismident{rate} \colon \prismident{update}$} representing a set of transitions of the module. The guard is a predicate over the state variables in the model. The update and the rate describe a transition that the module can make if the guard is true. 
The command optionally includes an action used to force two or more modules to make transitions simultaneously (i.e., to synchronize). For example, in the left model in Fig.~\ref{fig:bigSIR}, in state $(50, 20, 5)$ (i.e., $s = 50$, $i = 20$, and $r = 5$), the composed model can move to state $(49,21,5)$ by synchronizing over the action $\mathtt{infection}$. The rate of this transition is equal to the product of the individual rates of each module participating in an $\mathtt{infection}$ transition, which in this case amounts to $0.01 \cdot \mathtt{beta} \cdot \mathtt{plock}$.
Commands that do not have an action represent asynchronous transitions that can be taken independently (i.e., asynchronously) from other modules. 

By default, all modules are combined following standard parallel composition in the sense of the parallel operator from Communicating Sequential Processes algebra (CPS), that is, modules synchronize over all their common actions. The \prism\ language offers also other CPS-based operators to specify the way in which modules are composed in parallel.

Therefore, a parametric representation of a CTMC described by a \prism\ model shall consider \emph{transition rate expressions} which are closed under finite sums and finite products: sums deal with commands with overlapping guards and updates, while products take into account synchronization.

Let $\vec{x} = (x_1,\dots, x_n)$ be a vector of parameters. We write $\mathcal{E}$ for the set of polynomial maps $f \colon \mathbb{R}_{\geq 0}^n \to \mathbb{R}_{\geq 0}$ of the form $f(\vec{x}) = \sum_{i = 1}^{m} b_i \prod_{j=1}^{n} x_j^{a_{ij}}$, 
where $b_i \in \mathbb{R}_{\geq 0}$ and $a_{ij} \in \mathbb{N}$ for $i \in \{ 1, \dots, m \}$ and $j \in \{ 1, \dots, n \}$.
Note that $\mathcal{E}$ is a commutative semiring satisfying the requirements established above for transition rate expressions. 

We are now ready to introduce the notion of \emph{parametric} continuous-time Markov chain. 
\begin{definition}
A parametric CTMC is a tuple $\mathcal{P} = (S,R,s_0, \ell)$ where $S$, $s_0$, and $\ell$ are defined as for CTMCs, and $R \colon S \times S \to \mathcal{E}$ is a parametric transition rate function.
\end{definition}
Intuitively, a parametric CTMC $\mathcal{P} = (S,R,s_0, \ell)$ defines a family of CTMCs arising by plugging in concrete values for the parameters $\vec{x}$. Given a parameter evaluation $\vec{v} \in \mathbb{R}_{\geq 0}^n$, we denote by $\mathcal{P}(\vec{v})$ the CTMC associated with $\vec{v}$, and $R(\vec{v})$ for its rate transition function. Note that by construction $R(\vec{v})(s,s') \geq 0$ for all $s,s' \in S$, therefore $\mathcal{P}(\vec{v})$ is a proper CTMC.

As for CTMCs, parametric transitions rate functions can be equivalently described by means of a transition relation ${\to} \subseteq S \times \mathcal{E} \times S$, where the parametric transition rate from $s$ to $s'$ is $R(s,s')(\vec{x}) = \sum \{f(\vec{x}) \mid s \trans{f} s' \}$. 

\begin{example}
Consider the SIR model in Fig.~\ref{fig:bigSIR} with parameters $\mathtt{beta}$, $\mathtt{gamma}$, and $\mathtt{plock}$. The semantics of this model is a parametric CTMC with states $S=\{(s,i,r) \mid s,i,r \in \{0, \dots, 10^5 \} \}$ and initial state $(99936, 48, 16)$. 
For example, the initial state has two outgoing transitions: one that goes to $(99935, 49, 16)$ with rate $48.96815 \cdot \mathtt{beta} \cdot \mathtt{plock}$, and the other that goes to $(99935,48,17)$ with rate $49 \cdot \mathtt{gamma} \cdot \mathtt{plock}$. \qed
\end{example}
 
One relevant aspect of the class of parametric CTMCs is the fact that it is closed under parallel composition in the sense described above. As a consequence, the study of parameter estimation of \prism\ models from observed data can be conveniently addressed as maximum likelihood estimation for parametric CTMCs.


\section{Learning Parameters from Observed Sample Data} \label{sec:learnCTMC}
In this section we present two algorithms to estimate the parameters of parametric CTMC $\mathcal{P}$ from a collection of i.i.d.\ observation sequences $\O = \vec{o}_1, \dots, \vec{o}_J$.
The two algorithms consider two different types of observations: timed and non-timed. A \emph{timed observation} $\ell_{0:k},\tau_{0:k-1}$ is a finite sequence $\ell_0 \tau_0 \cdots \tau_{k-1} \ell_k$ representing consecutive dwell times and state labels observed during a random execution of $\M$. Similarly, a \emph{non-timed observation} $\ell_{0:k}$ represents a sequence of consecutive state labels observed during a random execution of $\M$. 
Both algorithms follow a maximum likelihood approach: the parameters $\vec{x}$ are estimated to maximize the joint likelihood $\mathcal{L}(\mathcal{P}(\vec{x}) | \O)$ of the observed data. 
When $\mathcal{P}$ and $\O$ are clear from the context, we simply write $\mathcal{L}(\vec{x})$ for the joint likelihood. 

Hereafter we present a solution to the maximum likelihood estimation problem building on an optimization framework known by the name MM algorithm~\cite{Lange13,Lange16}. 
In this line, our algorithms start with an initial hypothesis $\hypX{0}$ and iteratively improve the current hypothesis $\hypX{m}$, in the sense that the likelihood associated with the next hypothesis $\hypX{m+1}$ enjoys the inequality $\mathcal{L}(\hypX{m}) \leq \mathcal{L}(\hypX{m+1})$. The procedure terminates when the improvement does not exceed a fixed threshold $\epsilon$, namely when $\mathcal{L}(\hypX{m}) - \mathcal{L}(\hypX{m-1}) \leq \epsilon$. 

\subsection{Learning from Timed Observations} \label{sec:learntimed}
Assume we have $J$ i.i.d.\ timed observation sequences $\O = \vec{o}_1, \dots, \vec{o}_J$ where $\vec{o}_j = \ell^j_{0:k_j}, \tau^j_{1:k_j-1}$ ($j=1\dots J$). We want to estimate a valuation of the parameters $\vec{x}$ of $\mathcal{P}$ that maximises the joint likelihood function $\mathcal{L}(\vec{x}) = \prod_{j=1}^{J} l(\vec{o}_j | \mathcal{P}(\vec{x}))$ where the likelihood of an observation $\vec{o} = \ell_{0:k}, \tau_{0:k-1}$ for a generic CTMC $\M$ is
\begin{align}
&l (\vec{o} | \M) = \textstyle \sum_{s_{0:k}} l(S_{0:k} = s_{0:k}, L_{0:k} = \ell_{0:k}, T_{0:k-1} = \tau_{0:k-1} | \M) \notag \\
&\quad = \textstyle \sum_{s_{0:k}} P[S_{0:k} = s_{0:k}, L_{0:k} = \ell_{0:k} | \M] \cdot l(S_{0:k} = s_{0:k}, T_{0:k-1} = \tau_{0:k-1} | \M) \notag\\
&\quad = \textstyle \sum_{s_{0:k}} \bool{\ell(s_{0:k}) {=} \ell_{i:k}} \big( \prod_{i = 0}^{k-1} R(s_{i}, s_{i+1}) / E(s_{i}) \big) \, \big(\prod_{i = 0}^{k-1}E(s_{i})  \, e^{- E(s_{i}) \tau_{i}} \big) \notag\\
& \quad = \textstyle \sum_{s_{0:k}} \bool{\ell(s_{0:k}) = \ell_{i:k}} \prod_{i = 0}^{k-1} R(s_{i}, s_{i+1}) \cdot e^{- E(s_{i}) \tau_{i}} \,.\label{eq:LTObs2}
\end{align}

Before presenting an MM algorithm to solve the MLE problem above, we find it convenient to introduce some notation. Let $\mathcal{P} = (S,\to, s_0, \ell)$, we write $f_\rho$ for the rate map of the transition $\rho \in {\to}$, and write $s \to \cdot$ for the set of transitions departing from $s \in S$. 

Without loss of generality, we assume that the rate function $f_\rho$ of a transition is either a constant map, i.e., $f_\rho(\vec{x}) = c_r$ for some $c_r \geq 0$ or a map of the form $f_\rho(\vec{x}) = c_\rho \prod_{i = 1}^{n} x_i^{a_{\rho i}}$ for some $c_\rho > 0$ and $a_{\rho i} > 0$ for some $i \in \{1,\dots, n\}$; we write $a_\rho$ for $\sum_{i = 1}^n a_{\rho i}$. We denote by $\trans{c}$ the subset of transitions with constant rate function and $\trans{\vec{x}}$ for the remaining transitions. 

To maximize $\mathcal{L}(\vec{x})$ we propose to employ an MM algorithm based on the following surrogate function $g(\vec{x} | \hypX{m}) = \sum_{i = 1}^n g(x_i | \hypX{m})$ where
\begin{equation}
g(x_i | \hypX{m}) = 
    \sum_{\rho \in {\trans{\vec{x}}}} \xi_{\rho} a_{\rho i} \ln x_i 
    - \sum_{s} \sum_{\rho \in s \trans{\vec{x}} \cdot} \frac{f_\rho(\hypX{m}) a_{\rho i} \gamma_s}{a_\rho (x_{m i})^{a_\rho}} x_i^{a_\rho} \label{eq:surrogate-timed}
\end{equation} 
Here $\gamma_s = \sum_{j = 1}^{J} \sum_{t = 0}^{k_j-1} \gamma^j_s(t) \tau^j_{t}$ and $\xi_\rho = \sum_{j = 1}^{J} \sum_{t = 0}^{k_j-1} \xi^j_{\rho}(t)$, where $\gamma^j_{s}(t)$ denotes the likelihood that having observed $\vec{o}_j$ on a random execution of $\mathcal{P}(\hypX{m})$ the state $S_t = s$; 
and $\xi^j_{\rho}(t)$ is the likelihood that for such random execution the transition performed from state $S_t$ is $\rho$.

The following theorem states that the surrogate function $g(\vec{x} | \hypX{m})$ is a minorizer of the log-likelihood relative to the observed dataset $\O$. 
\begin{theorem} \label{thm:gMinorizes}
The surrogate function $g(\vec{x} | \hypX{m})$ minorizes $\ln \mathcal{L}(\vec{x})$ at $\hypX{m}$ up to an irrelevant constant. 
\end{theorem}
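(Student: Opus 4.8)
The plan is to start from the exact log-likelihood $\ln\mathcal{L}(\vec{x}) = \sum_{j=1}^{J} \ln l(\vec{o}_j \mid \mathcal{P}(\vec{x}))$ and apply the three basic minorizations \eqref{eq:basicmin1}, \eqref{eq:basicmin2}, \eqref{eq:basicmin3} to successive pieces of the expression \eqref{eq:LTObs2}. First I would observe that for a single observation $\vec{o}_j$ the likelihood $l(\vec{o}_j\mid\mathcal{M})$ is a sum over compatible state sequences $s_{0:k_j}$ of a product $\prod_{i} R(s_i,s_{i+1}) e^{-E(s_i)\tau_i}$. Taking logarithms, the outer sum over $s_{0:k_j}$ is handled by Jensen's inequality \eqref{eq:basicmin1}, choosing the weights $y_{s_{0:k}}$ to be the summands evaluated at the current iterate $\hypX{m}$; this is precisely the step that produces the posterior quantities, and after normalization these weights sum (over $j$, $t$) into the $\gamma^j_s(t)$ and $\xi^j_\rho(t)$ defined in the statement. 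The result is a lower bound that, up to an additive constant independent of $\vec{x}$ (collecting all the $\ln(\cdot/y)$ terms that don't involve $\vec{x}$), equals $\sum_{j,t} \mathbb{E}\big[\ln\big(\prod_i R(S_i,S_{i+1})e^{-E(S_i)\tau_i}\big)\big]$ where the expectation is under the posterior. Expanding the logarithm of the product turns this into a sum of two kinds of terms: $\sum_\rho \xi_\rho \ln f_\rho(\vec{x})$ from the rate factors, and $-\sum_s \gamma_s E(s)(\vec{x})$ from the exponential factors, where $\gamma_s$ absorbs the dwell-time weighting $\tau^j_t$ exactly as defined.

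Next I would simplify these two families of terms using the assumed monomial form $f_\rho(\vec{x}) = c_\rho \prod_i x_i^{a_{\rho i}}$. For the rate terms, $\ln f_\rho(\vec{x}) = \ln c_\rho + \sum_i a_{\rho i}\ln x_i$; the $\ln c_\rho$ is a constant and the rest contributes $\sum_\rho \xi_\rho \sum_i a_{\rho i}\ln x_i$, which is already separated across coordinates and matches the first sum in \eqref{eq:surrogate-timed}. Constant-rate transitions $\rho\in\trans{c}$ contribute only constants and drop out. For the exit-rate terms, $E(s)(\vec{x}) = \sum_{\rho\in s\to\cdot} f_\rho(\vec{x})$, so $-\gamma_s E(s)(\vec{x}) = -\sum_{\rho\in s\to\cdot}\gamma_s c_\rho\prod_i x_i^{a_{\rho i}}$; again the constant-rate part is a constant in $\vec{x}$ and is discarded. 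What remains is $-\sum_s\sum_{\rho\in s\trans{\vec{x}}\cdot}\gamma_s f_\rho(\vec{x})$ where I've written $f_\rho(\vec{x})$ for the monomial $c_\rho\prod_i x_i^{a_{\rho i}}$. To make this separable across the $x_i$ I apply the arithmetic–geometric-mean minorization \eqref{eq:basicmin3}: with the identification $x_i \leftrightarrow x_i^{a_{\rho i}}$ (more precisely, writing $\prod_i (x_i^{a_{\rho i}})^{1}$ and using exponents $\alpha_i = a_{\rho i}$, $\alpha = a_\rho$, and base points $y_i = x_{mi}^{a_{\rho i}}$... actually the clean route is $\prod_i x_i^{a_{\rho i}} = \prod_i (x_i)^{a_{\rho i}}$ with $\alpha_i = a_{\rho i}$) one gets $-\prod_i x_i^{a_{\rho i}} \geq -\big(\prod_i x_{mi}^{a_{\rho i}}\big)\sum_i \frac{a_{\rho i}}{a_\rho}\big(x_i/x_{mi}\big)^{a_\rho}$, i.e.\ $-f_\rho(\vec{x}) \geq -\sum_i \frac{f_\rho(\hypX{m})a_{\rho i}}{a_\rho (x_{mi})^{a_\rho}} x_i^{a_\rho}$. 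Summing this over $s$ and $\rho\in s\trans{\vec{x}}\cdot$, weighted by $\gamma_s$, yields exactly the second sum in \eqref{eq:surrogate-timed}.

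Finally I would verify the two defining axioms of minorization. Inequality \eqref{eq:minorization} follows because each of the three minorization steps is a genuine inequality in the right direction (Jensen applied inside $\ln$, then AGM applied to the negated monomials), and minorization is preserved under summation and under adding a constant. For the tangency condition \eqref{eq:tangent}, I would check that at $\vec{x} = \hypX{m}$ every applied inequality is tight: \eqref{eq:basicmin1} is an equality when $x_i = y_i$, which holds since the $y$'s were chosen as the summands at $\hypX{m}$; \eqref{eq:basicmin3} is an equality when all $x_i = y_i$, i.e.\ when $x_i^{a_{\rho i}} = x_{mi}^{a_{\rho i}}$, which holds at $\vec{x}=\hypX{m}$. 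Hence $g(\hypX{m}\mid\hypX{m}) = \ln\mathcal{L}(\hypX{m})$ up to the same irrelevant constant, as claimed.

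The main obstacle I anticipate is bookkeeping rather than conceptual: carefully tracking which terms are genuinely constant in $\vec{x}$ (and may be absorbed), and getting the normalization of the Jensen weights to line up so that the sums over $j$ and $t$ collapse precisely into the stated $\gamma_s$ and $\xi_\rho$ — in particular the appearance of the dwell-time factor $\tau^j_t$ inside $\gamma_s$ comes from the $e^{-E(s_i)\tau_i}$ factor and must be threaded through correctly. A secondary subtlety is confirming that the monomial/constant dichotomy assumed for $f_\rho$ genuinely covers the general polynomial case of $\mathcal{E}$; this is where one would invoke the earlier "without loss of generality" reduction (splitting each polynomial rate into a sum of monomial transitions), so that the derivation above applies term by term.
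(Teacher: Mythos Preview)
Your proposal is correct and follows essentially the same route as the paper's proof: Jensen's inequality \eqref{eq:basicmin1} applied to the sum over hidden state sequences (yielding the posterior weights $\gamma^j_s(t)$ and $\xi^j_\rho(t)$), expansion of $\ln f_\rho(\vec{x})$ via the monomial assumption~(B), and the arithmetic--geometric mean inequality \eqref{eq:basicmin3} to separate the coordinates in $-E(s)$. One minor remark: despite your opening sentence, inequality \eqref{eq:basicmin2} is not needed in this timed case (it enters only in the non-timed setting of Theorem~\ref{thm:hMinorizes}), and indeed you never invoke it in the body of your argument.
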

By Theorem~\ref{thm:gMinorizes} and the fact that the logarithm is an increasing function, we obtain that the parameter valuation that achieves the maximum of $g(\vec{x} | \hypX{m})$ improves the current hypothesis $\hypX{m}$ relative to likelihood function $\mathcal{L}(\vec{x})$.
\begin{corollary}
Let $\hypX{m+1} =\argmax_{\vec{x}} g(\vec{x} | \hypX{m})$, then $\mathcal{L}(\hypX{m}) \leq \mathcal{L}(\hypX{m+1})$.
\end{corollary}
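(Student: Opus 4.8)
The plan is to derive the corollary directly from Theorem~\ref{thm:gMinorizes} by replaying the generic MM ascent argument already sketched in the preliminaries, instantiated with the objective $f = \ln\mathcal{L}$ and the surrogate $g(\cdot \mid \hypX{m})$. The corollary is not a deep analytic statement but the standard monotonicity property of MM iterations; the only point that genuinely needs care is the phrase \emph{up to an irrelevant constant} in Theorem~\ref{thm:gMinorizes}, which means that $g$ itself need not satisfy the tangency and minorization axioms \eqref{eq:tangent}--\eqref{eq:minorization}, but some additive shift of it does.

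First I would fix that constant. By Theorem~\ref{thm:gMinorizes} there is a value $c \in \mathbb{R}$, independent of $\vec{x}$, such that $g'(\vec{x} \mid \hypX{m}) := g(\vec{x} \mid \hypX{m}) + c$ minorizes $\ln\mathcal{L}(\vec{x})$ at $\hypX{m}$ in the exact sense of \eqref{eq:tangent} and \eqref{eq:minorization}, i.e.\ $\ln\mathcal{L}(\hypX{m}) = g'(\hypX{m} \mid \hypX{m})$ and $\ln\mathcal{L}(\vec{x}) \geq g'(\vec{x} \mid \hypX{m})$ for every $\vec{x}$. Since $g$ and $g'$ differ only by the additive constant $c$, they share the same maximizers, so the hypothesis $\hypX{m+1} = \argmax_{\vec{x}} g(\vec{x} \mid \hypX{m})$ also yields $\hypX{m+1} = \argmax_{\vec{x}} g'(\vec{x} \mid \hypX{m})$. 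This is exactly the reduction licensed by the remark in the preliminaries that surrogates equal up to an irrelevant constant have the same $\argmax$.

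Next I would assemble the ascent chain. Evaluating the minorization inequality at $\vec{x} = \hypX{m+1}$ and combining it with the tangency identity at $\hypX{m}$ gives
\begin{equation*}
\ln\mathcal{L}(\hypX{m}) = g'(\hypX{m} \mid \hypX{m}) \leq g'(\hypX{m+1} \mid \hypX{m}) \leq \ln\mathcal{L}(\hypX{m+1}),
\end{equation*}
where the equality is \eqref{eq:tangent}, the middle inequality holds because $\hypX{m+1}$ maximizes $g'(\cdot \mid \hypX{m})$, and the final inequality is \eqref{eq:minorization} at $\hypX{m+1}$. Hence $\ln\mathcal{L}(\hypX{m}) \leq \ln\mathcal{L}(\hypX{m+1})$, and since the likelihood is positive, applying the strictly increasing exponential to both sides yields $\mathcal{L}(\hypX{m}) \leq \mathcal{L}(\hypX{m+1})$, which is the claim.

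There is no real analytic obstacle here, so the main thing to watch is the bookkeeping around the irrelevant constant: one must verify that $c$ is genuinely independent of $\vec{x}$, so that passing from $g$ to $g'$ preserves the $\argmax$ and the ascent chain. A secondary, purely implicit, issue is that the statement presupposes the maximizer $\hypX{m+1}$ exists. This does not enter the proof of the inequality, but if one wished to justify it, I would note that $g(\vec{x} \mid \hypX{m}) = \sum_i g(x_i \mid \hypX{m})$ separates over coordinates and each summand $g(x_i \mid \hypX{m})$ in \eqref{eq:surrogate-timed} is a nonnegative multiple of $\ln x_i$ minus nonnegative multiples of powers $x_i^{a_\rho}$ with $a_\rho \geq 1$, hence concave on $\mathbb{R}_{\geq 0}$, so its maximum is attained; this remark is, however, not needed for the stated inequality.
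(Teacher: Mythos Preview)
Your proof is correct and matches the paper's approach: the paper does not give a separate proof of this corollary but simply notes that it follows from Theorem~\ref{thm:gMinorizes} together with the monotonicity of the logarithm, which is exactly the standard MM ascent chain you spell out. Your handling of the ``up to an irrelevant constant'' caveat is the right way to make that step precise, and your additional remark on existence of the maximizer goes beyond what the paper addresses but is sound.
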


The surrogate function $g(\vec{x} | \hypX{m})$ is easier to maximize than $\mathcal{L}(\vec{x})$ because its parameters are separated. Indeed, maximization of $g(\vec{x} | \hypX{m})$ is done by point-wise maximization of each univariate function $g(x_i | \hypX{m})$.
 This has two main advantages: first, it is easier to handle high-dimensional problems~\cite{Lange13,Lange16}; second, if one can choose to fix the value of some parameters, say $I \subset \{1\dots n\}$ and the maximization of $g(\vec{x} | \hypX{m})$ can be performed by maximizing $g(x_i | \hypX{m})$ for each $i \notin I$. 

%
 The maxima of $g(x_i | \hypX{m})$ are found among the \emph{non-negative} roots\footnote{Note that $P_i$ always admits non-negative roots. 
Indeed, $P_i(0) \leq 0$ and $P_i(M) > 0$ for $M > 0$ sufficiently large. Therefore, by the intermediate value theorem, there exists $y_0 \in [0,M)$ such that $P_i(y_0) = 0$.} of the polynomial function $P_i \colon \mathbb{R} \to \mathbb{R}$
\begin{equation}
P_i(y) = \sum_{s} \sum_{\rho \in s \trans{\vec{x}}} \frac{f_\rho(\hypX{m}) a_{\rho i} \gamma_s}{(x_{m i})^{a_\rho} } y^{a_\rho} - \sum_{\rho \in {\trans{\vec{x}}}} \xi_{\rho} a_{\rho i}
 \label{updateTimed}
\end{equation}

\begin{remark}
\label{rm:category1}
There are some cases when \eqref{updateTimed} admits a closed-form solution. For instance, when the parameter index $i$ satisfies the property $\forall \rho \in {\trans{\vec{x}}} .\,a_{\rho i} > 0 \implies a_{\rho} = C$ for some constant $C \in \mathbb{N}$, then maximization of $g(x_i | \hypX{m})$ leads to the following update
\begin{equation*}
    x_{(m+1) i} = \left[ \frac{ (x_{m i})^C \sum_{\rho \in {\trans{\vec{x}}}} \xi_{\rho} a_{\rho i} }{ \sum_{s} \sum_{\rho \in s \trans{\vec{x}}} f_\rho(\hypX{m}) a_{\rho i} \gamma_s } \right]^{1/C}
\end{equation*}
A classic situation when the above condition is fulfilled occurs when all transitions $\rho$ where $x_i$ appear (i.e., $a_{\rho i} > 0$), the transition rate is $f_\rho(\vec{x}) = c_\rho x_i$ (i.e., $a_{\rho i} = a_\rho = 1$). In that case, the above equation simplifies to
\begin{equation*}
    x_{(m+1) i} = \frac{ \sum_{\rho \in {\trans{\vec{x}}}} \xi_{\rho} }{ \sum_{s} \sum_{\rho \in s \trans{\vec{x}}} c_\rho \gamma_s }
\end{equation*}

For example, the parametric CTMC associated with the SIR models in Fig.~\ref{fig:bigSIR} satisfies the former property for all parameters, because all transition rates are expressions either of the form $c \cdot \mathtt{plock} \cdot \mathtt{beta}$ or the form $c \cdot \mathtt{plock} \cdot \mathtt{gamma}$ for some constant $c > 0$. Furthermore, if we fix the value of the parameter $\mathtt{plock}$ the remaining parameters satisfy the latter property. 
In Section~\ref{sec:case-study}, we will take advantage of this fact for our calculations.  \qed
\end{remark}

To complete the picture, we show how to compute the coefficients $\gamma^j_s(t)$ and $\xi^j_{\rho}(t)$. To this end, we employ standard forward and backward procedures.
We define the forward function $\alpha_s^j(t)$ and the backward function $\beta_s^j(t)$ respectively as
\begin{align*}
\alpha_s^j(t) &= l(L_{0: t} = \ell^{j}_{0: t},T_{0: t} = \tau^{j}_{0: t}, S_{t} = s | \mathcal{P}(\hypX{m})) \, \text{, and} \\
\beta_s^j(t) &= l(L_{t+1:k_j} = \ell^{j}_{t+1 :k_j}, T_{t+1:k_j-1} = \tau^{j}_{t+1:k_j-1} | S_{t} = s , \mathcal{P}(\hypX{m}) ) \, .
\end{align*}
These can be computed using dynamic programming according to the following recurrences: let $\mathcal{P}(\hypX{m}) = (S,R,s_0, \ell)$, then
\begin{align}
\alpha_s^j(t) &= \begin{cases}
	\bool{s = s_0} \, \omega^j_s(t) &\text{if $ t=0$} \\
	\omega^j_{s}(t) \sum_{s' \in S} \frac{R(s', s)}{E(s')} \, \alpha_{s'}^j(t-1) &\text{if $0 < t \leq k_j$}
\end{cases} 
\\
\beta_s^j(t) &= \begin{cases}
	1 &\text{if $ t= k_j$} \\
	  \sum_{s' \in S} \frac{R(s, s')}{E(s)} \, \beta_{s'}^j(t+1) \, \omega^j_{s'}(t+1) &\text{if $0 \leq t < k_j$}
\end{cases}
\end{align}
where 
\begin{equation}
\omega^j_s(t) = 
\begin{cases}
\bool{\ell(s) = \ell^j_t} E(s) e^{- E(s) \tau^j_{t}} 
    &\text{if $0 \leq t < k_j$,} \\
\bool{\ell(s) = \ell^j_t} &\text{if $t = k_j$.}  
\end{cases}
\label{eq:omegaobs}
\end{equation}

Finally, for $s \in S$ and $\rho = (s \trans{f_\rho} s')$, $\gamma^j_{s}(t)$ and $\xi^j_{\rho}(t)$ are related to the forward and backward functions as follows
\begin{align}
\gamma^j_{s}(t) = \frac{\alpha_s^j(t) \, \beta_s^j(t) }{ \sum_{s' \in S} \alpha_{s'}^j(t) \, \beta_{s'}^j(t)}  \, , 
&&
\xi^j_{\rho}(t) = \frac{ \alpha_s^j(t) f_\rho(\hypX{m}) \, \omega^j_{s'}(t+1) \,\beta_{s'}^j(t+1) }{ E(s) \sum_{s'' \in S} \alpha_{s''}^j(t) \, \beta_{s''}^j(t)}  \,. \label{eq:defGammaXi}
\end{align}

\subsection{Learning from Non-timed Observations} \label{sec:learnnontimed}
Let now assume we have collected $J$ i.i.d.\ non-timed observation sequences $\O = \vec{o}_1, \dots, \vec{o}_J$ where $\vec{o}_j = \ell^j_{0:k_j}$ ($j=1 \dots J$). As done before, we want to maximize the joint likelihood function $\mathcal{L}(\vec{x}) = \prod_{j=1}^{J} l(\vec{o}_j | \mathcal{P}(\vec{x}))$ where the likelihood an arbitrary non-timed observation $\vec{o} = \ell_{0:k}$ relative to the CTMC $\M$ is
\begin{align}
l(\ell^j_{1:k} | \M ) &= \textstyle \sum_{s_{0:k}} P[S_{0:k} = s_{0:k}, L_{0:k} = \ell^j_{1:k} | \M] \label{eq:LObs1} \\ 
&= \textstyle \sum_{s_{0:k}} \bool{\ell(s_{0:k}) {=} \ell_{i:k}} \prod_{i = 0}^{k-1} R(s_{i}, s_{i+1}) / E(s_{i}) \,. \label{eq:LObs2}
\end{align}
Looking at the formula above, it is clear that whenever two CTMCs $\M_1$ and $\M_2$ have the same embedded Markov chain they will have also the same likelihood value, i.e.,  $\mathcal{L}(\M_1 | \O) = \mathcal{L}(\M_2 | \O)$. The fact that dwell time variables are not observable leaves us with an MLE objective that does not fully capture the continuous-time aspects of the model under estimation. 

A similar problem shows up also in the Bradley--Terry model of ranking~\cite{BradleyT52}. This model is intuitively understood via a sport analogy. Given a set of teams where each team $i$ is assigned a rank parameter $r_i > 0$, assuming that ties are not possible, team $i$ beats team $j$ with probability $r_i / (r_i + r_j)$. If this outcome occurs $c_{ij}$ times during a tournament, then the probability of the whole tournament is $L(\vec{r}) = \prod_{i,j} (r_i / (r_i + r_j))^{c_{ij}}$, assuming that games are independent one another. Clearly, $L(\vec{r}) = L(c \,\vec{r})$ for any $c > 0$. Under mild assumptions, the function $L(\vec{r})$ admits a unique maximum when the value of one rank, say $r_1$, is fixed a priori.  

Back to our problem, we claim that the race conditions among transitions can be interpreted under the Bradley--Terry model of ranking. As a consequence, when the number of parametric transitions is sufficiently small relative to that of constant transitions, the estimation of the unknown transition rates can hinge on the value of the transition rates that are fixed, leading the algorithm to converge to the real parameter values. 

For the non-timed maximum likelihood estimation problem we devise an MM algorithm based on the surrogate function $h(\vec{x} | \hypX{m}) = \sum_{i = 1}^{n} h(x_i | \hypX{m})$ for
\begin{equation}
h(x_i | \hypX{m}) =  
    \sum_{\rho \in {\trans{\vec{x}}}} \hat\xi_{\rho} a_{\rho i} \ln x_i 
    - \sum_{s} \sum_{\rho \in s \trans{\vec{x}} \cdot} \frac{f_\rho(\hypX{m}) \, a_{\rho i} \,  \hat\gamma_s}{E_{m}(s) \, a_\rho \, x_{m i}^{a_\rho}} x_i^{a_\rho} \label{eq:surrogate-nontimed}
\end{equation}  
where $E_{m}(s)$ denotes the exit rate of the state $s$ in $\mathcal{P}(\hypX{m})$, $\hat\gamma_s = \sum_{j = 1}^{J}\sum_{t = 0}^{k_j - 1} \hat\gamma_s^j(t)$, and $\hat\xi_\rho = \sum_{j = 1}^{J}\sum_{t = 0}^{k_j - 1} \hat\xi_\rho^j(t)$. 

This time, the coefficients $\hat{\gamma}^j_{s}(t)$ and $\hat{\xi}^j_{\rho}(t)$ denote respectively the probability that having observed $\vec{o}_j$ in a random execution of $\mathcal{P}(\hypX{m})$, the state $S_t$ is $s$, and the transition performed in state $S_t$ is $\rho$. $\hat{\gamma}^j_{s}(t)$ and $\hat{\xi}^j_{r}(t)$  can be computed using the same dynamic programming procedure described in Section~\ref{sec:learntimed} by replacing each occurrence of $\omega^j_s(t)$ with $\hat{\omega}^j_s(t) = \bool{\ell(s) = \ell^j_t}$.

The following theorem states that the surrogate function $h(\vec{x} | \hypX{m})$ is a minorizer of the log-likelihood relative to the observed (non-timed) dataset $\O$.
\begin{theorem} \label{thm:hMinorizes}
The surrogate function $h(\vec{x} | \hypX{m})$ minorizes $\ln \mathcal{L}(\vec{x})$ at $\hyp_{m}$ up to an irrelevant constant. 
\end{theorem}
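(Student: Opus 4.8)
The plan is to mirror, step by step, the derivation that establishes Theorem~\ref{thm:gMinorizes} for the timed case, tracking where the absence of dwell-time information changes the bookkeeping. First I would expand $\ln\mathcal{L}(\vec{x}) = \sum_{j=1}^J \ln l(\vec{o}_j\mid\mathcal{P}(\vec{x}))$ using the expression \eqref{eq:LObs2} for the non-timed likelihood of a single observation. Applying the Jensen-based minorization \eqref{eq:basicmin1} to the sum over state sequences $s_{0:k}$ inside each $\ln l(\vec{o}_j\mid\mathcal{P}(\vec{x}))$ --- with the ``$y_i$'' taken to be the terms of the same sum evaluated at the current hypothesis $\hypX{m}$ --- lower-bounds each summand by a weighted sum of logarithms of per-path quantities. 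The weights are exactly the posterior path probabilities, which aggregate state-wise into the coefficients $\hat\gamma^j_s(t)$ and transition-wise into $\hat\xi^j_\rho(t)$. Up to a constant independent of $\vec{x}$ (the logarithm of the partition terms $\sum_j y_j$), this produces a surrogate of the shape $\sum_{s}\sum_{\rho\in s\to\cdot} \hat\xi_\rho \ln f_\rho(\vec{x}) - \sum_s \hat\gamma_s \ln E(\vec{x})(s)$, where the second sum comes from the denominators $E(s_i)$ in \eqref{eq:LObs2}; note that, in contrast to the timed case, there is no $e^{-E(s)\tau}$ factor, so $\ln E(s)$ appears in place of the linear dwell-time term, and this is precisely where the extra $1/E_m(s)$ weight in \eqref{eq:surrogate-nontimed} will originate.

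Next I would split each transition rate according to $f_\rho(\vec{x}) = c_\rho\prod_i x_i^{a_{\rho i}}$ for $\rho\in\trans{\vec{x}}$ and $f_\rho(\vec{x})=c_\rho$ for $\rho\in\trans{c}$. The constant-rate transitions contribute only $\vec{x}$-independent terms, which can be discarded. For $\rho\in\trans{\vec{x}}$, $\ln f_\rho(\vec{x}) = \ln c_\rho + \sum_i a_{\rho i}\ln x_i$, so the positive part of the surrogate becomes $\sum_{\rho\in\trans{\vec{x}}}\hat\xi_\rho \sum_i a_{\rho i}\ln x_i$ up to a constant, matching the first term of $h(x_i\mid\hypX{m})$ after regrouping by parameter index $i$. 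The delicate part is the $-\hat\gamma_s \ln E(s)$ term: here $E(\vec{x})(s) = \sum_{\rho\in s\to\cdot} f_\rho(\vec{x})$ is a sum, and $-\ln$ of a sum is concave-unfriendly, so I would apply the supporting-hyperplane minorization \eqref{eq:basicmin2} with $x = E(\vec{x})(s)$ and $y = E_m(s)$, yielding $-\ln E(\vec{x})(s) \ge 1 - \ln E_m(s) - E(\vec{x})(s)/E_m(s)$. Dropping constants, this leaves a term $-\sum_s \hat\gamma_s E(\vec{x})(s)/E_m(s) = -\sum_s \sum_{\rho\in s\to\cdot} \hat\gamma_s f_\rho(\vec{x})/E_m(s)$, which explains the $1/E_m(s)$ factor in \eqref{eq:surrogate-nontimed}.

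The final step is to handle the monomials $f_\rho(\vec{x}) = c_\rho\prod_i x_i^{a_{\rho i}}$ sitting inside this negative term: I would apply the arithmetic--geometric-mean minorization \eqref{eq:basicmin3} with the exponents $\alpha_i = a_{\rho i}$, $\alpha = a_\rho$, evaluated at $y_i = x_{m i}$, to get $-\prod_i x_i^{a_{\rho i}} \ge -\big(\prod_i x_{m i}^{a_{\rho i}}\big)\sum_i \frac{a_{\rho i}}{a_\rho}(x_i/x_{m i})^{a_\rho}$; multiplying through by $c_\rho/E_m(s)$ and recognizing $c_\rho\prod_i x_{m i}^{a_{\rho i}} = f_\rho(\hypX{m})$ gives exactly the second summand of \eqref{eq:surrogate-nontimed} after collecting by $i$. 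Summing all the minorization steps and discarding the accumulated constant yields $\ln\mathcal{L}(\vec{x}) \ge h(\vec{x}\mid\hypX{m}) + \text{const}$, with equality at $\vec{x}=\hypX{m}$ since each of \eqref{eq:basicmin1}, \eqref{eq:basicmin2}, \eqref{eq:basicmin3} is tight there; this establishes both \eqref{eq:tangent} and \eqref{eq:minorization} and proves the theorem. I expect the main obstacle to be the careful choice of which quantity plays the role of ``$y$'' in the \eqref{eq:basicmin2} step for $-\ln E(s)$ and verifying that composing three separate minorizations still leaves equality holding simultaneously at $\hypX{m}$ --- i.e., that the constant absorbed at each stage is genuinely $\vec{x}$-independent and that tightness is not lost in the piecemeal composition. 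A secondary check is confirming that the posterior coefficients produced by the Jensen step indeed coincide with the $\hat\gamma^j_s(t),\hat\xi^j_\rho(t)$ computed by the forward--backward recurrence with $\hat\omega^j_s(t) = \bool{\ell(s)=\ell^j_t}$, which follows by the same argument as in Section~\ref{sec:learntimed} with the $\omega$ weights simplified.
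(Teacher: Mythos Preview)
Your proposal is correct and follows essentially the same route as the paper's proof: Jensen's inequality \eqref{eq:basicmin1} over hidden state sequences to introduce the posterior weights $\hat\gamma^j_s(t),\hat\xi^j_\rho(t)$, then the supporting-hyperplane minorization \eqref{eq:basicmin2} applied to $-\ln E(\vec{x})(s)$ at $y=E_m(s)$ to linearize the exit-rate term (producing the $1/E_m(s)$ factor), and finally the AM--GM minorization \eqref{eq:basicmin3} to separate the monomials $\prod_i x_i^{a_{\rho i}}$. The paper additionally makes explicit the simplifying assumptions (at most one transition between each pair of states, each $f_\rho$ either constant or a single monomial) before carrying out exactly these steps, but otherwise your plan and the paper's argument coincide.
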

\begin{proof}(sketch) To easy the presentation we assume that the parametric CTMC $\mathcal{P}$ under study has at most one transition between each pair of states. 
Starting from the log-likelihood function $\ln \mathcal{L}(\vec{x})$, we proceed with the following minorization steps\footnote{We denote by $f(\vec{x}) \cong f'(\vec{x})$ the fact that $f(\vec{x}) =  f'(\vec{x}) + C$ for some (irrelevent) constant $C$.}
\begin{align*}
&\ln \mathcal{L}(\vec{x}) = \sum_{j = 1}^{J} \ln l(\vec{o}_j | \mathcal{P}(\vec{x}))
= \sum_{j = 1}^{J}  \ln  \sum_{s_{0:k_j}} P[ s_{0:k_j}, \vec{o}_j | \mathcal{P}(\vec{x})]  \tag{by \eqref{eq:LObs1}} \\
&\geq \sum_{j = 1}^{J}  \sum_{s_{0:k_j}} {P[s_{0:k_j} | \vec{o}_j , \mathcal{P}(\hypX{m})]} \ln \left( \frac{P[ s_{0:k_j}, \vec{o}_j | \mathcal{P}(\vec{x})]}{P[s_{0:k_j} | \vec{o}_j , \mathcal{P}(\hypX{m})]} \right) \tag{by \eqref{eq:basicmin1}}\\
&\cong 
\sum_{j = 1}^{J}  \sum_{t = 1}^{k_j} \sum_{s_{0:k_j}} P[s_{0:k_j} | \vec{o}_j , \mathcal{P}(\hypX{m})] \big( \ln R(s_{t}, s_{t+1}) - \ln E(s_{t}) \big) \tag{by \eqref{eq:LObs2}} \\ 
&\cong \sum_{\rho \in {\trans{\vec{x}}}} \hat\xi_{\rho} \ln f_\rho(\vec{x}) + \sum_{s} \hat\gamma_s (- \ln E(s) )\tag{up-to const} \\
&\geq 
\sum_{i = 1}^{n} \sum_{\rho \in {\trans{\vec{x}}}} \hat\xi_{\rho} a_{\rho i} \ln x_i + \sum_{s} \hat\gamma_s \left( - \frac{E(s)}{E_{m}(s)} \right) \tag{by \eqref{eq:basicmin2}, up-to const} \\
& \geq \sum_{i = 1}^n \left[ 
    \sum_{\rho \in {\trans{\vec{x}}}} \hat{\xi}_{\rho} a_{\rho i} \ln x_i 
    - \sum_{s} \sum_{\rho \in s \trans{\vec{x}} \cdot} \frac{\hat{\gamma}_s f_\rho(\hypX{m}) a_{\rho i}}{E_{m}(s) a_\rho x_{m i}^{a_\rho}} x_i^{a_\rho} \right] \tag{$\triangle\triangle$} \label{eq:hatminEsbis} \\
& = h(\vec{x} | \hypX{m}) \tag{by \eqref{eq:surrogate-nontimed}}
\end{align*}
Where the step \eqref{eq:hatminEsbis} is justified by the minorization of $- E(s)$ obtained via \eqref{eq:basicmin3} as follows
\begin{equation*}
- E(s) 
\cong \sum_{\rho \in s \trans{\vec{x}} \cdot} c_{\rho} \left( - \prod_{i = 1}^{n} x_i^{a_{\rho i}} \right) 
\geq \sum_{\rho \in s \trans{\vec{x}}} - f_{\rho}(\hypX{m}) \sum_{i = 1}^{n} \frac{a_{\rho i}}{a_\rho} \left(\frac{x_i}{x_{m i}} \right)^{a_\rho} \,.
\end{equation*}
Hence, there exists a constant $C > 0$ such that $h(\vec{x} | \hypX{m}) + C$ minorizes $\ln \mathcal{L}(\vec{x})$ at $\hypX{m}$. 
\end{proof}

Notably, in the proof of Theorem~\ref{thm:hMinorizes} we employ the minorization~\eqref{eq:basicmin2} used in~\cite{LangeHY00} for finding rankings in the Bradley--Terry model.

As an immediate corollary of Theorem~\ref{thm:hMinorizes}, we have that the parameter valuations that maximize $h(\vec{x} | \hypX{m})$ improve the current hypothesis $\hypX{m}$ with respect to the ML objective.
\begin{corollary}
Let $\hypX{m+1} =\argmax_{\vec{x}} h(\vec{x} | \hypX{m})$, then $\mathcal{L}(\hypX{m}) \leq \mathcal{L}(\hypX{m+1})$.
\end{corollary}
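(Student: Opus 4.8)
The plan is to obtain this Corollary as a direct instance of the generic ascent property of MM algorithms recalled in Section~\ref{sec:MMintro}, applied to the minorizer built in Theorem~\ref{thm:hMinorizes}. First I would invoke Theorem~\ref{thm:hMinorizes} to fix a constant $C$, independent of $\vec{x}$, such that $h(\vec{x} \mid \hypX{m}) + C$ minorizes $\ln \mathcal{L}(\vec{x})$ at $\hypX{m}$ --- that is, it meets the tangency condition~\eqref{eq:tangent} and the domination condition~\eqref{eq:minorization} with $f(\vec{x}) = \ln \mathcal{L}(\vec{x})$. (If $\mathcal{L}(\hypX{m}) = 0$ the inequality to be proved is trivial, so we may assume the likelihood is positive and $\ln \mathcal{L}$ is finite at $\hypX{m}$.)

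Second, I would observe that shifting a function by a constant does not change where its maximum is attained, so $\argmax_{\vec{x}} \big( h(\vec{x} \mid \hypX{m}) + C \big) = \argmax_{\vec{x}} h(\vec{x} \mid \hypX{m}) = \hypX{m+1}$; this is exactly the ``equivalent surrogate up to an irrelevant constant'' remark at the end of Section~\ref{sec:MMintro}. Substituting $f = \ln \mathcal{L}$ and the surrogate $h(\cdot \mid \hypX{m}) + C$ into the generic MM inequality chain then gives
\[
\ln \mathcal{L}(\hypX{m}) \;=\; h(\hypX{m} \mid \hypX{m}) + C \;\le\; h(\hypX{m+1} \mid \hypX{m}) + C \;\le\; \ln \mathcal{L}(\hypX{m+1}) \,,
\]
where the equality is~\eqref{eq:tangent}, the middle step holds because $\hypX{m+1}$ maximizes $h(\cdot \mid \hypX{m}) + C$, and the last step is~\eqref{eq:minorization} evaluated at $\vec{x} = \hypX{m+1}$. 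Since $t \mapsto \ln t$ is strictly increasing, $\ln \mathcal{L}(\hypX{m}) \le \ln \mathcal{L}(\hypX{m+1})$ yields $\mathcal{L}(\hypX{m}) \le \mathcal{L}(\hypX{m+1})$.

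There is no genuinely hard step: the argument is the textbook MM ascent lemma, specialised to the minorizer of Theorem~\ref{thm:hMinorizes}. The only point worth a line of care is that $\hypX{m+1}$ is well defined, i.e.\ the maximum of $h(\cdot \mid \hypX{m})$ is attained. Since $h(\vec{x} \mid \hypX{m}) = \sum_{i=1}^{n} h(x_i \mid \hypX{m})$ separates the coordinates, it is enough that each univariate $h(x_i \mid \hypX{m})$ attains a maximum on $\mathbb{R}_{\ge 0}$; this holds because it is continuous on $(0,\infty)$, bounded above as $x_i \to 0^+$, and driven to $-\infty$ as $x_i \to \infty$ by the $-x_i^{a_\rho}$ terms --- the same observation used in the footnote on the non-negative roots of $P_i$ in the timed case. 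Coordinates $x_i$ occurring in no parametric rate do not affect $\mathcal{L}$ and may be left unchanged.
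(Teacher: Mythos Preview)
Your proposal is correct and matches the paper's intended reasoning: the corollary is stated there as an immediate consequence of Theorem~\ref{thm:hMinorizes} together with the generic MM ascent argument from Section~\ref{sec:MMintro}, which is precisely what you spell out. Your added remarks on the trivial case $\mathcal{L}(\hypX{m})=0$ and on the attainment of the maximum of $h(\cdot \mid \hypX{m})$ go slightly beyond what the paper makes explicit, but are consistent with it.
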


As before, maximization of $h(\vec{x} | \hypX{m})$ is achieved by point-wise maximization of $h(x_i | \hypX{m})$. The maxima of $h(x_i | \hypX{m})$ are found among the \emph{non-negative} roots of the polynomial function
\begin{equation}
Q_i(y) = 
\sum_{s} \sum_{\rho \in s \trans{\vec{x}} \cdot} \frac{f_\rho(\hypX{m}) a_{\rho i} \hat\gamma_s}{E(s) (x_{m i})^{a_\rho}} y^{a_\rho}
    - \sum_{\rho \in {\trans{\vec{x}}}} \hat\xi_{\rho} a_{\rho i} 
     \label{updateUntimed}
\end{equation}
By arguments similar to those explained in Remark~\ref{rm:category1}, Equation~\eqref{updateUntimed} may admit a closed-form solution. 

\section{Experimental evaluation} \label{sec:experiments}
\begin{figure}[t]
\input{prismmodels/tandem.tex}
\caption{Prism model for the tandem queueing network from \cite{Hermanns99}.}
\label{model:tandem}
\end{figure}
We implemented the algorithms from Section~\ref{sec:learnCTMC} as an extention of the \texttt{Jajapy} Python library~\cite{jajapy}, which has the advantage of being compatible with \prism\ models.
In this section, we present an empirical evaluation of the efficiency of our algorithms as well as the quality of their outcome. 
To this end, we use the tandem queueing network model from \cite{Hermanns99} (\cf\ Fig.~\ref{model:tandem}) as a benchmark for our evaluation. 

The experiments have been designed according to the following setup. We assume that the state of $\mathtt{serverC}$ is fully observable ---i.e., its state variables $\mathtt{sc}$ and $\mathtt{ph}$ are--  as well as the size $\mathtt{c}$ of the queue and the value of $\mathtt{lambda}$. In contrast, we assume that the state of $\mathtt{serverM}$ is not observable. 

Each experiment consists in estimating the value of the parameters $\mathtt{mu1a}$, $\mathtt{mu1b}$, $\mathtt{mu2}$, and $\mathtt{kappa}$ from a training set consisting of $100$ observation sequences of length $30$, generated by simulating the \prism\ model depicted in Fig.~\ref{model:tandem}.
We perform this experiment both using timed and non-timed observations, by increasing the size $\mathtt{c}$ of the queue until the running time of the estimation exceeds a time-out set to $1$ hour. 
We repeat each experiment 10
times by randomly re-sampling the initial values of each unknown parameter $x_i$ in the interval $[0.1,\, 5.0]$. We annotate the running time as well as the relative error 
$\delta_i$ for each parameter $x_i$, calculated according to the formula $\delta_i = |e_i - r_i| / |r_i|$, where $e_i$ and $r_i$ are respectively the estimated value and the real value of $x_i$.

\begin{table}[!ht]
\begin{center}
\rowcolors{4}{gray!10}{gray!40}
\begin{tabular}{|c|c|c|c|c|c|c|c|c|}
\hline
\multirow{2}{*}{$\mathtt{c}$} & \multirow{2}{*}{$|S|$} & \multirow{2}{*}{$|{\to}|$} & \multicolumn{2}{c|}{Running time (s)} & \multicolumn{2}{c|}{$\norm{1}{\delta}$} & \multicolumn{2}{c|}{$\norm{\infty}{\delta}$}\\ \cline{4-9}
& & & Timed & Non-timed & Timed & Non-timed & Timed & Non-timed \\ \hline \hline
4 & 45 & 123 & 4.336 & 15.346 & 0.226 & 0.251 & 0.13 & 0.13\\ \hline
6 & 91 & 269 & 13.219 & 38.661 & 0.399 & 0.509 & 0.173 & 0.329\\ \hline
8 & 153 & 471 & 37.42 & 90.952 & 0.322 & 0.387 & 0.183 & 0.187\\ \hline
10 & 231 & 729 & 76.078 & 170.044 & 0.359 & 0.346 & 0.17 & 0.191\\ \hline
12 & 325 & 1043 & 160.694 & 276.383 & 0.343 & 0.616 & 0.165 & 0.289\\ \hline
14 & 435 & 1413 & 264.978 & 623.057 & 0.373 & 0.263 & 0.195 & 0.117\\ \hline
16 & 561 & 1839 & 458.766 & 774.642 & 0.406 & 0.427 & 0.245 & 0.192\\ \hline
18 & 703 & 2321 & 871.39 & 1134.037 & 0.249 & 0.783 & 0.14 & 0.49\\ \hline
20 & 861 & 2859 & 1425.65 & 1225.539 & 0.416 & 0.987 & 0.281 & 0.519\\ \hline
22 & 1035 & 3453 & 2031.587 & 1297.383 & 0.546 & 1.013 & 0.278 & 0.602\\ \hline
24 & 1225 & 4103 & 2675.794 & 1924.074 & 0.441 & 1.892 & 0.281 & 1.599\\ \hline
\end{tabular}
\end{center}
\caption{Comparison of the performance of the estimation for timed and non-timed observations on the tandem queueing network with different size of the queue.}
\label{tab:tandem-res}
\end{table}
\begin{figure}[ht]
    \centering
    \includegraphics[width=0.8\textwidth]{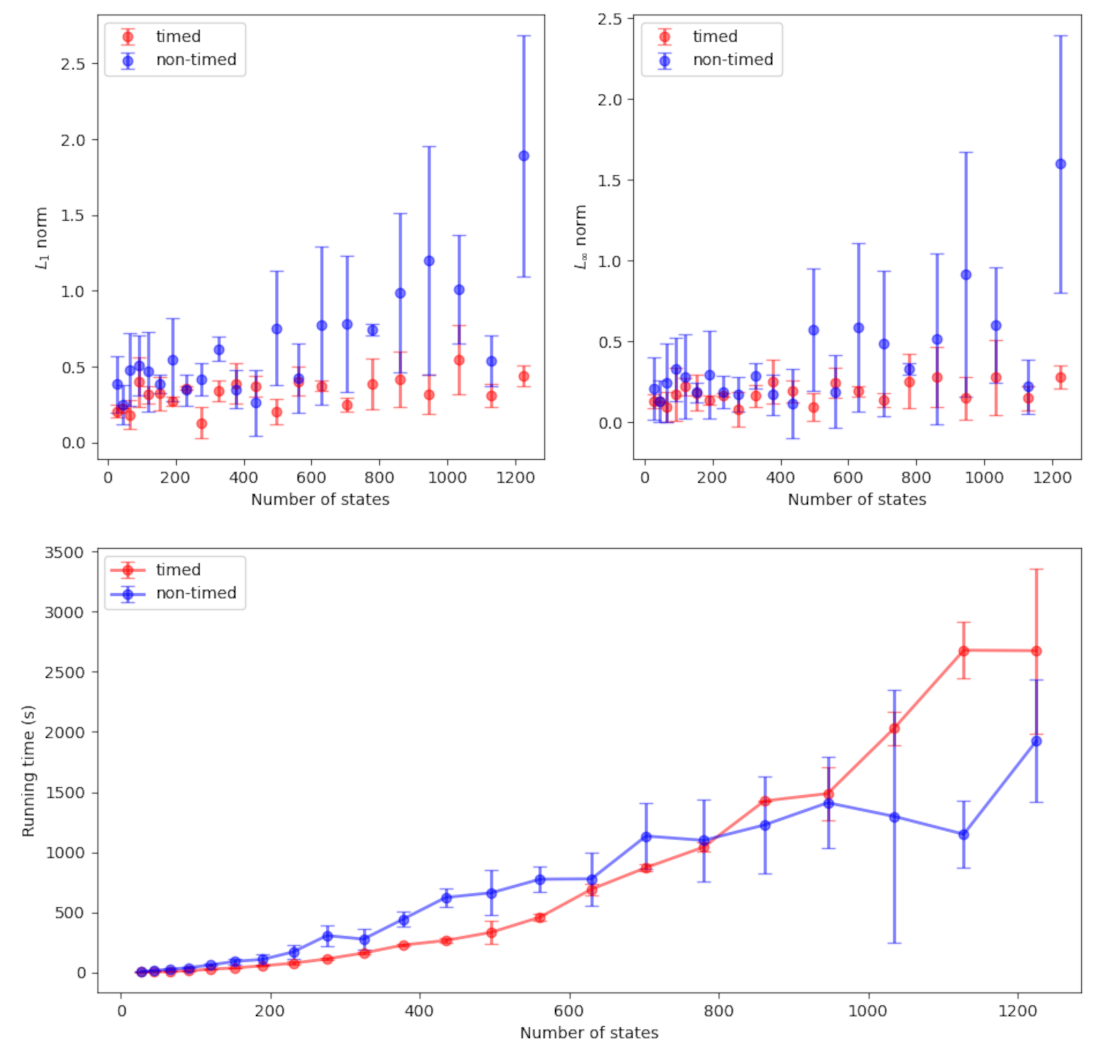}
    \caption{Comparison of the performance of the estimation for timed and non-timed observations on the tandem queueing network with different size of the queue.}
    \label{fig:tandem-res}
\end{figure}

Table \ref{tab:tandem-res} reports the results for some selected experiments. The second and third columns provide respectively the number of states and transitions of the parametric CTMC resulting from the choice of $\mathtt{c}$; the fourth column reports the average running time; while the fifth (resp.\ sixth) column details the average $L_1$-norm (resp. $L_\infty$-norm) of the vector $\delta = (\delta_i)$, calculated as $\norm{1}{\delta} = \sum_{i} |\delta_i|$ (resp. $\norm{\infty}{\delta} = \max_i |\delta_i|$). 

Fig.~\ref{fig:tandem-res} reports the results of all the experiments in a graphical format where measurements are presented together with their respective error bars. 

We observe that the running time is quadratic in the number of states (equivalently, linear in the size $|S|+|{\to}|$ of the model) both for timed and non-timed observations. However, for non-timed observations, the variance of the measured running times tends to grow with the size of the model. In this respect, we observed that large models required more iterations than small models to converge. Nevertheless, all experiments required at most 20 iterations.

As one may expect, the variance of the measured relative errors is larger on the experiments performed with non-timed observations, and the quality of the estimation is better when employing timed observations. 
Notably, for timed observations, the quality of the estimation remained stable despite the size of the model increased relatively to the size of the training set. This may be explained by the fact that the parameters occur in many transitions.

\section{Case Study: SIR modeling of pandemic} \label{sec:case-study}
In this section, we take as a case study the modeling pipeline proposed by Milazzo~\cite{Milazzo21} for the analysis and simulation in \prism\ of the spread of COVID-19 in presence of lockdown countermeasures.  
The modeling pipeline includes:
\begin{inlineenum}
\item parameter estimation from real data based on a modified SIR model described by means of a system of Ordinary Differential Equations; 
\item \label{item:translation} translation of the modified SIR model into a CTMC expressed as a \prism\ model; and 
\item \label{item:analaysis} stochastic simulation and model checking with \prism.
\end{inlineenum}

In particular, the \prism\ model devised in step~(\ref{item:translation}) is exactly the model depicted in Fig.~\ref{fig:bigSIR} (left). However, to perform the analysis, Milazzo had to apply ``a couple of modeling tricks (variable pruning and upper bounds) that allowed state space of the model constructed by \prism\ to be reduced by several orders of magnitude. The introduction of upper bounds to the values of variables actually introduces a small approximation in the model, that is negligible in practically relevant cases''~\cite{Milazzo21}. 
We argue that these kinds of modeling tricks are not uncommon in formal verification, but they require the modeler to ensure that the parameter values estimated for the original model are still valid in the approximated one. 

In this section, we showcase the use of our algorithm to semi-automatize this task. Specifically, we generate two training sets by simulating the SIR model in Fig.~\ref{fig:bigSIR} using \prism\ and, based on that, we re-estimate $\mathtt{beta}$, $\mathtt{gamma}$, and $\mathtt{plock}$ on an approximated version of the model (\cf\ Fig.~\ref{model:SIR-initial}) which is amenable to analysis in \prism.  

\begin{figure*}[ht]
\input{prismmodels/sir_init_clean.tex}
\caption{Approximated SIR model.}
\label{model:SIR-initial}
\end{figure*}

The first training set represents the spread of the disease without lockdown (i.e., $\mathtt{plock} = 1$), while the second one is obtained by fixing the value of $\mathtt{plock}$ estimated in~\cite{Milazzo21} (i.e., $\mathtt{plock} = 0.472081$). In line with the data set used in~\cite{Milazzo21}, both training sets consist of one timed observation reporting the number of infected individuals for a period of 30 days.


The estimation of the parameters $\mathtt{beta}$, $\mathtt{gamma}$ and $\mathtt{plock}$ is performed on the model depicted in Fig.~\ref{model:SIR-initial}.
As in~\cite{Milazzo21}, we use an approximated version of the original SIR model (\cf\ Fig.~\ref{fig:bigSIR}) obtained by employing a few modeling tricks: variable pruning, set upper bounds on the state variable $\mathtt{i}$, and re-scaling of the variable $\mathtt{r}$ in the interval $[0,\mathtt{nb\_r} -1]$. 
These modeling tricks have the effect to reduce the state space of the underlying CTMC, speeding-up in this way parameter estimation and the following model analysis. 


We perform the estimation in two steps. First, we estimate the values of $\mathtt{beta}$ and $\mathtt{gamma}$ on the first training set with $\mathtt{plock}$ set to $1$. 
Then, we estimate the value of $\mathtt{plock}$ on the second training set with $\mathtt{beta}$ and $\mathtt{gamma}$ set to the values estimated in the first step.

Each step was repeated 10 times by randomly re-sampling the initial values of each unknown parameter in the interval $[0,1]$. Table \ref{tab:sir-res} reports the average estimated values and absolute errors relative to each parameter. 
The average running time\footnote{Experiments were performed on a Linux machine with an AMD-Ryzen 9 3900X 12-Core processor and 32 GB of RAM.} of each execution of the algorithm was $89.94$ seconds.
\begin{table}[!ht]
\begin{center}
\rowcolors{2}{gray!10}{gray!40}
\begin{tabular}{|c|c|c|c|}
\hline
\textbf{Parameter} & \textbf{Expected Value} & \textbf{Estimated Value} & \textbf{Absolute Error}\\ \hline \hline
$\mathtt{beta}$     & $0.122128$   & $0.135541$    & $0.013413$\\ \hline
$\mathtt{gamma}$    & $0.127283$   & $0.128495$    & $0.001212$\\ \hline
$\mathtt{plock}$    & $0.472081$   & $0.437500$    & $0.034581$\\ \hline
\end{tabular}
\end{center}
\caption{Parameter estimation on the approximated SIR model.}
\label{tab:sir-res}
\end{table}

Our results confirm Milazzo's claim that the introduction of upper bounds to the values of state variables produces a small approximation in the model. 
Notably, we were able to achieve an accurate estimation of all the parameters from training sets consisting of  a single partially-observable execution of the original SIR model. As observed in Section~\ref{sec:experiments}, this may be due to the fact that each parameter occurs in many transitions. 

This case study demonstrates that our estimation procedure can be effectively used to simplify modeling pipelines that involve successive modifications of the model and the re-estimation of its parameter values. 
In line with the model checking problem, also our technique requires the modeler to take the size of the model into account. 

\section{Conclusion and Future Work} \label{sec:conclusion} 
We presented novel methods to estimate parameters values of CTMCs expressed as \prism\ models from timed and non-timed partially-observable executions. 
We demonstrated, through the use of a case-study, that our solution is a concrete aid in applications involving modeling and analysis, especially when the model under study requires successive adaptations which may lead to approximations that require re-estimation of the parameters of the model.

Notably, all the algorithms presented in this paper were devised following simple optimization principles borrowed from the MM optimization framework. 

We suggest that similar techniques can be employed to other modeling languages (e.g., Markov automata~\cite{EisentrautHZ10,EisentrautHZ10a}) and metric-based approximate minimization~\cite{BacciBLM17,BalleLPPR21}.
An interesting future direction of research consists in extending our techniques to non-deterministic stochastic models by integrating the active learning strategies presented in~\cite{BacciILR21}.


\bibliography{biblio}

\newpage
\appendix

\section{Missing proofs}
\begin{proof}[proof of Theorem~\ref{thm:gMinorizes}]
For convenience, we establish the result for a parametric CTMC $\mathcal{P}$ that satisfies the following assumptions: 
\begin{enumerate}[(A)]
    \item there is at most one transition between each pair of states; \label{item:A}
    \item for each transition $\rho = (s \trans{f_{\rho}} s')$, the map $f_\rho$ is either a constant (i.e., $f_\rho(\vec{x}) = c_\rho$ with $c_\rho \geq 0$) or of the form $f_\rho(\vec{x}) = c_\rho \prod_{i = 1}^{n} x_i^{a_{\rho i}}$ where $c_\rho > 0$ and $a_{\rho i} > 0$ for some $i \in \{1 \dots n\}$. \label{item:B}
\end{enumerate}
Assumption \ref{item:B} does not restrict the generality of the formulation. Indeed, a transition $\rho = (s \trans{f_{\rho}} s')$ where $f_\rho(\vec{x}) = \sum_{j = 1}^{m} c_j \prod_{i = 1}^{n} x_i^{a_{j i}}$ can be replaced by $m$ transitions of the form $s \trans{f_{i}} s'$ where $f_i(\vec{x}) = c_j \prod_{i = 1}^{n} x_i^{a_{j i}}$ ($j = 1 \dots m$). Note that in case $c_j = 0$ or $a_{j i} = 0$ for all $i \in \{1 \dots n\}$ the resulting map simplifies to a constant. 

As for Assumption~\ref{item:A}, assume $\mathcal{P}$ has $m \geq 2$ transitions from $s$ to $s'$, say $\rho_i = (s \trans{r_i} s')$ for $i = 1 \dots m$. We replace $s'$ with $m$ copies of it, say $s'_i$ for $i = 1 \dots m$ ---each having the same label and outgoing transitions of $s'$--- then redirect $\rho_i$ from $s'$ to $s'_i$ ---i.e., creating transitions $\bar{\rho}_i = (s \trans{r_i} s'_i)$. We then repeat this step until the chain has at most one transition between each pair of states. 

Note that the step described above ensures that the the source state $s$ does not chance its behavior, because by construction $s'$ is bisimilar to each $s'_i$ for $i = 1 \dots m$. As a consequence, the values $\alpha^j_s$ and $\beta^j_s$ are left unchanged and, for $i =1 \dots m$, $\alpha^j_{s'} = \alpha^j_{s'_i}$ and $\beta^j_{s'} = \beta^j_{s'_i}$. Therefore, by \eqref{eq:defGammaXi} we have that $\gamma^j_s$ is left unchanged and $\xi^j_{\rho_i} = \xi^j_{\bar{\rho}_i}$ for all $i = 1 \dots m$. 

Starting from the log-likelihood function, we proceed with the following minorization steps
\begin{align}
&\ln \mathcal{L}(\vec{x}) 
= \sum_{j = 1}^{J} \ln l(\vec{o}_j | \mathcal{P}(\vec{x})) 
= \sum_{j = 1}^{J} \ln \left( \sum_{s_{0:k_j}}  l(s_{0:k_j}, \vec{o}_j | \mathcal{P}(\vec{x}) ) \right) \\
&\geq \sum_{j = 1}^{J}  \sum_{s_{0:k_j}} \frac{l(s_{0:k_j}\vec{o}_j | \mathcal{P}(\hypX{m}))}{l(\vec{o}_j | \mathcal{P}(\hypX{m}))} \ln \left( \frac{l(\vec{o}_j | \mathcal{P}(\hypX{m}))}{l(s_{0:k_j}\vec{o}_j | \mathcal{P}(\hypX{m}))}  l(s_{0:k_j},\vec{o}_j | \mathcal{P}(\vec{x}))  \right) \tag{\eqref{eq:basicmin1}}\\
&\cong \sum_{j = 1}^{J}  \sum_{s_{0:k_j}} l(s_{0:k_j} | \vec{o}_j, \mathcal{P}(\hypX{m})) \ln \left( l(s_{0:k_j}\vec{o}_j | \mathcal{P}(\vec{x})) \right) \tag{up-to const} \\
&= \sum_{j = 1}^{J}  \sum_{s_{0:k_j}} l(s_{0:k_j} | \vec{o}_j, \mathcal{P}(\hypX{m})) \ln \left( \bool{\ell(s_{0:k_j}) {=} \ell^j_{0:k_j} }\prod_{t = 0}^{k_j-1} R(s_{t}, s_{t+1}) \cdot e^{- E(s_{t}) \tau^j_{t}} \right) \tag{\eqref{eq:LTObs2}}
\intertext{since $\ell(s_{0:k_j}) \neq\ell^j_{0:k_j}$ implies $l(s_{0:k_j} | \vec{o}_j, \mathcal{P}(\hypX{m})) = 0$, the above simplifies to}
&= \sum_{j = 1}^{J} \sum_{t = 0}^{k_j-1} \sum_{s_{0:k_j}} l(s_{0:k_j} | \vec{o}_j, \mathcal{P}(\hypX{m})) \big(\ln R(s_{t}, s_{t+1}) - E(s_{t}) \tau^j_{t} \big) \tag{*}\label{eq:gammaxi}
\end{align}
For $s \in S$ and $\rho = (s \trans{f_\rho} s')$, we define $\gamma^{j}_{s}(t)$ and $\xi^j_{\rho}(t)$ as
\begin{align*}
&\gamma^{j}_{s}(t) = l(S_t=s \mid \vec{o}_j, \mathcal{P}(\hypX{m})), 
&& \xi^{j}_{\rho}(t) = l(S_t=s, S_{t+1}=s' \mid \vec{o}_j, \mathcal{P}(\hypX{m})) \,.
\end{align*}
Then, we can reformulate \eqref{eq:gammaxi} as follows
\begin{align}
&\cong  \sum_{j = 1}^{J} \sum_{t = 0}^{k_j-1} \left( \sum_{\rho \in {\trans{\vec{x}}}} \xi^j_{\rho}(t) \ln f_{\rho}(\vec{x}) +
\sum_{s} \gamma^j_s(t) \tau^j_{t} (- E(s)) \right) \tag{up-to const}\\
&\cong  \sum_{\rho \in {\trans{\vec{x}}}} \xi_{\rho} \ln f_{\rho}(\vec{x}) +
\sum_{s} \gamma_s (- E(s)) \tag{rearrange}\\
&\cong \sum_{i = 1}^{n} \sum_{\rho \in {\trans{\vec{x}}}} \xi_{\rho} a_{\rho i} \ln x_i +
\sum_{s} \gamma_s (- E(s)) \tag{\eqref{item:B},up-to const}\\
&\geq \sum_{i = 1}^n \left[ 
    \sum_{\rho \in {\trans{\vec{x}}}} \xi_{\rho} a_{\rho i} \ln x_i 
    - \sum_{s} \sum_{\rho \in s \trans{\vec{x}} \cdot} \frac{f_\rho(\hypX{m}) a_{\rho i} \gamma_s \, }{a_\rho x_{m i}^{a_\rho}} x_i^{a_\rho} \right] \tag{**}\label{eq:laststep}\\
&= g(\vec{x} | \hypX{m}) \tag{by \eqref{eq:surrogate-timed}}
\end{align}
Where \eqref{eq:laststep} is justified by the following minorization of $- E(s)$ 
\begin{align*}
& - E(s) = \sum_{\rho \in s \to \cdot} - f_{\rho}(\vec{x}) \cong \sum_{\rho \in s \trans{\vec{x}} \cdot} c_{\rho} \left( - \prod_{i = 1}^{n} x_i^{a_{\rho i}} \right) \tag{up-to const, by \eqref{item:B}}\\
& \geq \sum_{\rho \in s \trans{\vec{x}}} - c_{\rho} \left(\prod_{i = 1}^{n} x_{m i}^{a_{\rho i}} \right) \sum_{i = 1}^{n} \frac{a_{\rho i}}{a_\rho} \left(\frac{x_i}{x_{m i}} \right)^{a_\rho} \tag{by \eqref{eq:basicmin3}} \\
& \geq - \sum_{i = 1}^{n} \sum_{\rho \in s \trans{\vec{x}}} \frac{f_{\rho}(\hypX{m}) a_{\rho i}}{a_\rho \, x_{m i}^{a_\rho}} x_i^{a_\rho} \tag{rearranging}
\end{align*}

As shown above, there exists a (non-negative) constant $c$ such that the surrogate function $g(\vec{x} | \hypX{m}) + c$ minorizes $\ln \mathcal{L}(\vec{x})$ at $\hypX{m}$.
\end{proof}

\begin{proof}[proof of Theorem~\ref{thm:hMinorizes}]
As done before, we establish the result for a parametric CTMC $\mathcal{P}$ satisfying assumption (\ref{item:A}) and (\ref{item:B}) form the proof of Theorem~\ref{thm:gMinorizes}.

Starting from the log-likelihood function $\ln \mathcal{L}(\vec{x})$, we proceed with the following minorization steps
\begin{align*}
&\ln \mathcal{L}(\vec{x}) = \sum_{j = 1}^{J} \ln l(\vec{o}_j | \mathcal{P}(\vec{x}))
= \sum_{j = 1}^{J}  \ln  \sum_{s_{0:k_j}} P[ s_{0:k_j}, \vec{o}_j | \mathcal{P}(\vec{x})]  \tag{by \eqref{eq:LObs1}} \\
&\geq \sum_{j = 1}^{J}  \sum_{s_{0:k_j}} {P[s_{0:k_j} | \vec{o}_j , \mathcal{P}(\hypX{m})]} \ln \left( \frac{P[ s_{0:k_j}, \vec{o}_j | \mathcal{P}(\vec{x})]}{P[s_{0:k_j} | \vec{o}_j , \mathcal{P}(\hypX{m})]} \right) \tag{by \eqref{eq:basicmin1}}\\
&\cong \sum_{j = 1}^{J}  \sum_{s_{0:k_j}} P[s_{0:k_j} | \vec{o}_j , \mathcal{P}(\hypX{m})] \ln P[ s_{0:k_j}, \vec{o}_j | \mathcal{P}(\vec{x})] \tag{up-to const}\\
&= \sum_{j = 1}^{J}  \sum_{s_{0:k_j}} P[s_{0:k_j} | \vec{o}_j , \mathcal{P}(\hypX{m})] \ln \left( \bool{\ell(s_{0:k_j}) = \ell_{0:k_j}} \prod_{t = 0}^{k_j-1} \frac{R(s_{t}, s_{t+1})}{E(s_{t})} \right) \tag{by \eqref{eq:LObs2}} \\
&= \sum_{j = 1}^{J}  \sum_{t = 1}^{k_j} \sum_{s_{0:k_j}} P[s_{0:k_j} | \vec{o}_j , \mathcal{P}(\hypX{m})] \big( \ln R(s_{t}, s_{t+1}) - \ln E(s_{t}) \big) \tag{$\triangle$} \label{eq:hatgammaxi}
\end{align*}
For $s \in S$ and $\rho = (s \trans{f_\rho} s')$, we define $\hat\gamma^{j}_{s}(t)$ and $\hat\xi^j_{\rho}(t)$ as
\begin{align*}
&\hat\gamma^{j}_{s}(t) = P[S_t=s \mid \vec{o}_j, \mathcal{P}(\hypX{m})], 
&& \hat\xi^{j}_{\rho}(t) = P[S_t=s, S_{t+1}=s' \mid \vec{o}_j, \mathcal{P}(\hypX{m})] \,.
\end{align*}
Then, we can reformulate \eqref{eq:hatgammaxi} as follows
\begin{align*}
&\cong \sum_{j = 1}^{J}  \sum_{t = 1}^{k_j} \left( \sum_{\rho \in {\trans{\vec{x}}}} \hat\xi^j_{\rho}(t) \ln f_\rho(\vec{x}) + \sum_{s} \hat\gamma^j_s(t) (- \ln E(s) ) \right)\tag{up-to const} \\
&= \sum_{\rho \in {\trans{\vec{x}}}} \hat\xi_{\rho} \ln f_\rho(\vec{x}) + \sum_{s} \hat\gamma_s (- \ln E(s) )\tag{rearrange} \\
&\geq \sum_{\rho \in {\trans{\vec{x}}}} \hat\xi_{\rho} \ln f_\rho(\vec{x}) + \sum_{s} \hat\gamma_s \left(1 - \ln E_{m}(s) - \frac{E(s)}{E_{m}(s)} \right) \tag{by \eqref{eq:basicmin2}}\\
&\cong  \sum_{i = 1}^{n} \sum_{\rho \in {\trans{\vec{x}}}} \hat\xi_{\rho} a_{\rho i} \ln x_i + \sum_{s} \hat\gamma_s \left( - \frac{E(s)}{E_{m}(s)} \right) \tag{\eqref{item:B}, up-to const} \\
& \geq \sum_{i = 1}^n \left[ 
    \sum_{\rho \in {\trans{\vec{x}}}} \hat{\xi}_{\rho} a_{\rho i} \ln x_i 
    - \sum_{s} \sum_{\rho \in s \trans{\vec{x}} \cdot} \frac{\hat{\gamma}_s f_\rho(\hypX{m}) a_{\rho i}}{E_{m}(s) a_\rho (x_{m i})^{a_\rho}} x_i^{a_\rho} \right] \tag{$\triangle\triangle$} \label{eq:hatminEs} \\
& = h(\vec{x} | \hypX{m}) \tag{by \eqref{eq:surrogate-nontimed}}
\end{align*}
Where the step \eqref{eq:hatminEs} is justified by the minorization of $- E(s)$ provided in the proof of Theorem~\ref{thm:gMinorizes}.

Therefore, there exists a constant $c$ such that the surrogate function $h(\vec{x} | \hypX{m}) + c$ minorizes $\ln \mathcal{L}(\vec{x})$ at $\hypX{m}$. 
\end{proof}

\end{document}